\newcommand{\Note}[2]{}
\DeclareMathAlphabet{\mathsf}{OT1}{cmss}{m}{n}
\SetMathAlphabet{\mathsf}{bold}{OT1}{cmss}{bx}{n}
\newcommand{\removed}[1]{}
\newcommand{\yestwocl}[1]{
\ifdefined\twocl 
{#1} 
\fi}
\newcommand{\notwocl}[1]{
\ifdefined\twocl 
{}
\else 
{#1} 
\fi}
\title{\Huge \bf The Physical Systems Behind Optimization Algorithms}
\author{L. F. Yang,  R. Arora, V. Braverman, and T. Zhao\thanks{Lin Yang is affiliated with Department of Operations Research and 　Financial Engineering at Princeton University; Raman Arora and Vladimir Braverman are affiliated with Department of Computer Science at Johns Hopkins University; Tuo Zhao is affiliated with School of Industrial and Systems Engineering at Georgia Institute of Technology; Email: \texttt{lin.yang@princeton.edu, tourzhao@gatech.edu}; Tuo Zhao is the corresponding author.}}
\date{}
\begin{document}

\maketitle


\begin{abstract}
We use differential equations based approaches to provide some {\it \textbf{physics}} insights into analyzing the dynamics of popular optimization algorithms in machine learning. In particular, we study gradient descent, proximal gradient descent, coordinate gradient descent, proximal coordinate gradient, and Newton's methods as well as their Nesterov's accelerated variants in a unified framework motivated by a natural connection of optimization algorithms to physical systems. Our analysis is applicable to more general algorithms and optimization problems {\it \textbf{beyond}} convexity and strong convexity, e.g. Polyak-\L ojasiewicz and error bound conditions (possibly nonconvex).
\end{abstract}


\section{Introduction}

Many machine learning problems can be cast into an optimization problem of the following form:
\begin{align}\label{general-problem}
x^*=\argmin_{x\in\mathcal{X}} f(x),
\end{align}
where $\mathcal{X}\subseteq\RR^d$ and $f:\mathcal{X}\rightarrow \RR$ is a continuously 
differentiable function. For simplicity, we assume that $f$ is convex or approximately convex (more on this later). Perhaps, the earliest algorithm for solving \eqref{general-problem} is the vanilla gradient descent (VGD) algorithm, which dates back to Euler and Lagrange. VGD is simple, intuitive, and easy to implement in practice. For large-scale problems, it is usually more scalable than more sophisticated algorithms (e.g. Newton).

Existing state-of-the-art analysis of VGD shows it achieves $\cO(1/k)$ convergence for general convex functions and linear convergence rate for strongly convex functions, where $k$ is the number of iterations \citep{Nes13}. Recently, a class of so-called Nesterov's accelerated gradient (NAG) algorithms has gained popularity in statistical signal processing and machine learning communities. These algorithms combine the vanilla gradient descent algorithm with an additional momentum term at each iteration. Such a modification, though simple, has a profound impact: the NAG algorithms attain faster convergence than VGD. Specifically, NAG achieves $\cO(1/k^2)$ convergence for general convex functions, and linear convergence with a better constant term for strongly convex functions \citep{Nes13}.

Another closely related class of algorithms is randomized coordinate gradient descent (RCGD) algorithms. These algorithms conduct a gradient descent-type step in each iteration, but only with respect to a single coordinate. RCGD has similar convergence rates to VGD, but has a smaller overall computational complexity, since its computational cost per iteration of RCGD is much smaller than VGD~\citep{nesterov2012efficiency,lu2015complexity}. More recently, \cite{lin2014accelerated,fercoq2015accelerated} applied Nesterov's acceleration to RCGD, and proposed accelerated randomized coordinate gradient (ARCG) algorithms. Accordingly, they established similar accelerated convergence rates for ARCG.



Another line of research focuses on relaxing the convexity and strong convexity conditions 
for alternative regularity conditions, including restricted secant inequality, error bound, Polyak-\L ojasiewicz, and quadratic growth conditions. These conditions have been shown to hold for many optimization problems in machine learning, and faster convergence rates have been established (e.g. \cite{luo1993error, liu2015asynchronous, necoara2015linear, zhang2013gradient, gong2014linear,
KNS16}). 




Although various theoretical results have been established, the algorithmic proof of convergence and regularity conditions in these analyses rely heavily on algebraic tricks that are sometimes arguably mysterious to be understood. To address this concern, differential equation approaches recently have attracted enormous 
interests on the analysis of optimization algorithms, because they provide a clear interpretation for the continuous approximation of the algorithmic systems \citep{SBC14,WWJ16}.
 \cite{SBC14} propose a framework for studying discrete 
 algorithmic systems under the limit of infinitesimal time step. 
 They show that Nesterov's accelerated gradient (NAG) algorithm
 can be described by an ordinary differential equation (ODE)
 under the limit that time step tends to $0$. \cite{WWJ16} study a more general family of 
 ODE's that essentially correspond to accelerated gradient algorithms.
 All these analyses, however, lack a link to a natural physical system behind the optimization algorithms. Therefore, they do not clearly explain why the momentum leads to
 acceleration. Meanwhile, these analyses only consider general convex conditions and gradient descent-type algorithms, and are NOT applicable to either the aforementioned relaxed conditions or coordinate-gradient-type algorithms (due to the randomized coordiante selection).
 
 
\noindent{\bf{Our Contribution (I):}}
We provide some new {\bf physics} insights for the differential equation approaches. Particularly, we connect the differential equations to natural physical systems.  Such a new connection allows us to establish a unified theory for understanding these optimization algorithms. Specifically, we consider the VGD, NAG, RCGD, and ARCG algorithms. All these algorithms are associated with {\bf damped oscillator} 
 systems with different {\bf particle mass} and {\bf damping coefficients}.
 For example, VGD corresponds to a massless particle system, NAG corresponds to a massive particle system. 
 A damped oscillator system has a natural dissipation of its mechanical energy. The decay rate of the mechanical energy 
 in the system essentially connects to the convergence rate of the algorithm. 
 Our results match the convergence rates of all considered algorithms in existing literature.
 For a massless system, the convergence rate only depends
 on the gradient (force field) and smoothness of the function, whereas 
 a massive particle system has an energy decay rate proportional
 to the ratio between the mass and damping coefficient. 
We further show that the optimal algorithm such as NAG correspond to an oscillator system near {\bf critical damping}. Such a phenomenon is known in the physical literature that the critically damped system undergoes the fastest energy dissipation.
Thus, this approach can potentially help us to design new optimization algorithms in a more 
intuitive way. 
{As pointed out by the anonymous reviewers, although some of the intuitions provided in this paper are also presented in \cite{polyak1964some}, 
we provide a more detailed analysis in this paper.}
  
  
\noindent{\bf{Our Contribution (II):}} We provide new analysis for more general optimization problems beyond general convexity and strong convexity, as well as more general algorithms. Specifically, we provide several concrete examples: (1) VGD achieves linear convergence under the Polyak-\L ojasiewicz (PL) condition (possibly nonconvex), which matches the state-of-art result in \cite{KNS16}; (2) NAG achieves accelerated linear convergence (with a better constant term) under both general convex and quadratic growth conditions, which matches the state-of-art result in \cite{zhang2016new}; (3) Coordinate-gradient-type algorithms share the same ODE approximation with gradient-type algorithms, and our analysis involves a \emph{more refined} infinitesimal analysis; (4) Newton algorithm achieves linear convergence under the strongly convex and self-concordance conditions. See a summary in Table \ref{summary} as follows. Due to space limit, we present the extension to the nonsmooth composite optimization problem in Appendix.

\begin{table}[H]
	\centering
\caption{Our contribution compared with \cite{SBC14, WWJ16}.}
\includegraphics[width=0.7\textwidth]{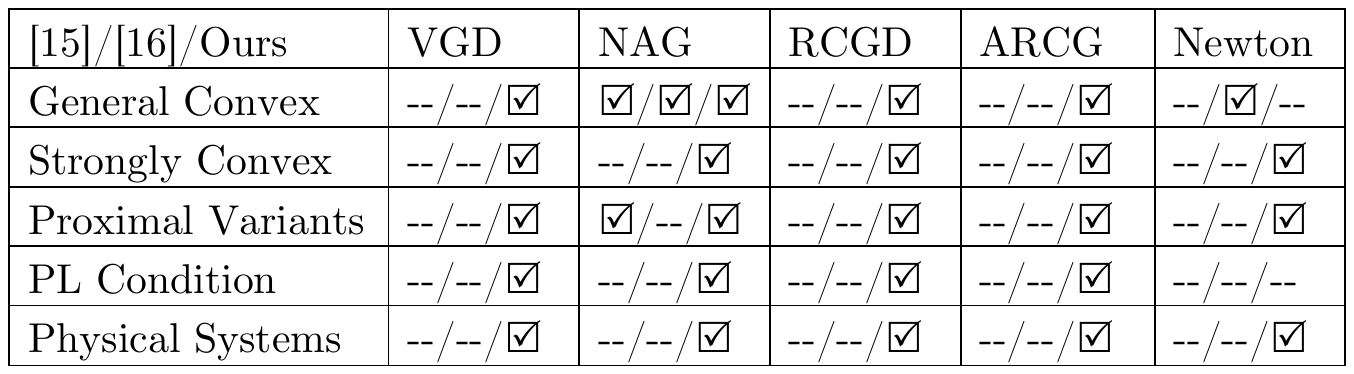}\label{summary}
\end{table}

Recently, an independent paper of \cite{wilson2016lyapunov}  studies a similar framework for analyzing first order optimization algorithms, and they focus on bridging the gap between discrete algorithmic analysis and continuous approximation. While we focus on understanding the physical systems~ behind~ the~ optimization. Both perspectives are essentially complementary to each other.

Before we proceed, we first introduce assumptions on the objective $f$.


\begin{assumption}[($L$-smooth)]
There exits a constant $L$ such that for any $x,~y\in\RR^d$, we have $\norm{\nabla f(x) - \nabla f(y)}\le L\norm{x-y}$.
\end{assumption}


\begin{assumption}[($\mu$-strongly convex)] There exits a constant $\mu$ such that for any $x,~y\in\RR^d$, we have $f(x) \geq f(y) + \langle\nabla f(y),x-y\rangle + \frac{\mu}{2}\norm{x-y}^2$.
\end{assumption}


\begin{assumption} ($L_{\max}$-coordinate-smooth) There exits a constant $L_{\max}$ such that for any $x,~y\in\RR^d$, we have $|\nabla_jf(x) - \nabla_jf(x_{\setminus j}, y_j)|\le L_{\max}(x_j-y_j)^2$ for all $j=1,...,d$.
\end{assumption}


The $L_{\max}$-coordinate-smooth condition has been shown to be satisfied by many machine learning problems such as Ridge Regression and Logistic Regression. For convenience, we define $\kappa=L/\mu~\textrm{and}~\kappa_{\max}=L_{\max}/\mu$. Note that we also have $L_{\max}\leq L\leq dL_{\max}~\textrm{and}~\kappa_{\max}\leq \kappa\leq d\kappa_{\max}$.

\section{From Optimization Algorithms to ODE}\label{sec:phys}

We develop a unified representation for the continuous approximations of the aforementioned optimization algorithms. Our analysis is inspired by \cite{SBC14}, where the NAG algorithm for general convex function is approximated by an ordinary differential equation under the limit of infinitesimal time step. We start with VGD and NAG, and later we will show that RCGD and ARCG can be approximated by the same ODE. For self-containedness, we present a brief review for popular optimization algorithms in Appendix~\ref{appendix:optimization algs} (VGD, NAG, RCGD, ARCG, and Newton). 

\subsection{A Unified Framework for Continuous Approximation Analysis}\label{sec:unified framework}

By considering an infinitesimal step size, we rewrite VGD and NAG in the following generic form:
\begin{align}
\label{eqn:unified-gradient}
x^{(k)} = y^{(k-1)} - \eta \nabla f(y^{(k-1)}) 
\ifdefined\twocl
\nonumber\\
\else
\quad \textrm{and}\quad
\fi
y^{(k)} = x^{(k)} + \alpha (x^{(k)} - x^{(k-1)}).
\end{align}
For VGD, $\alpha=0$; For NAG, $\alpha= \frac{\sqrt{1/(\mu\eta)}-1}{\sqrt{1/(\mu\eta)}+1}$ when $f$ is strongly convex, and $\alpha=\frac{k-1}{k+2}$ when $f$ is general convex. 
We then rewrite \eqref{eqn:unified-gradient} as
\begin{align}
\label{eqn:discrete-eqn}
\rbr{x^{(k+1)} - x^{(k)}} -\alpha \rbr{x^{(k)}-x^{(k-1)}} 
\ifdefined\twocl
\nonumber\\
\else
\fi
+ \eta \nabla f\rbr{x^{(k)} + \alpha (x^{(k)} -x^{(k-1)})} = 0.
\end{align}
{When considering the continuous-time limit of the above equation, it is not immediately clear how the continuous-time is related to the step size $k$. We thus
let $h$ denote the time scaling factor and study the possible choices of $h$ later on.} With this, we define a continuous time variable
\begin{align}\label{time-iteration-scaling}
t=kh\quad\textrm{with}\quad X(t) = x^{(\lceil t/h\rceil)} = x^{(k)},
\end{align} 
where $k$ is the iteration index, and $X(t)$ from $t=0$ to $t=\infty$ is a trajectory characterizing the dynamics of the algorithm.
Throughout the paper, we may omit $(t)$ if it is clear from the context. 



Note that our definition in \eqref{time-iteration-scaling} is very different from \cite{SBC14}, where $t$ is defined as $t=k\sqrt{\eta}$, i.e., {fixing $h=\sqrt{\eta}$}. 
There are several advantages by using our new definition: {\bf (1)} The new definition leads to a unified analysis for both VGD and NAG. Specifically, if we follow the same notion as \cite{SBC14}, we need to redefine $t=k\eta$ for VGD, which is different from $t=k\sqrt{\eta}$ for NAG; {\bf (2)} The new definition is more flexible, and leads to a unified analysis for both gradient-type (VGD and NAG) and coordinate-gradient-type algorithms (RCGD and ARCG), regardless of their different step sizes, e.g $\eta=1/L$ for VGD and NAG, and $\eta=1/L_{\max}$ for RCGD and ARCG; {\bf (3)} The new definition is equivalent to \cite{SBC14} only when $h=\sqrt{\eta}$. We will show later that, however, $h\asymp \sqrt{\eta}$ is a natural requirement of a massive particle system rather than an artificial choice of $h$.


We then proceed to derive the differential equation for \eqref{eqn:discrete-eqn}. By Taylor expansion
\begin{align*}
&\rbr{x^{(k+1)}-x^{(k)}}  = \dot{X}(t)h + \frac{1}{2}\ddot{X}(t)h^2 + o(h),\nonumber \\
&\rbr{x^{(k)}-x^{(k-1)}} = \dot{X}(t) h - \frac{1}{2}\ddot{X}(t)h^2 + o(h),\nonumber \\
~\textrm{and}~ &\eta\nabla f\sbr{x^{(k)} + \alpha\rbr{x^{(k)}-x^{(k-1)}}}
=\eta \nabla f(X(t)) + O(\eta h).
\end{align*}
where $\dot{X}(t)=\frac{dX(t)}{dt}$ and $\ddot{X}(t) =\frac{d^2X}{dt^2}$,
we can rewrite \eqref{eqn:discrete-eqn} as
\begin{align}
\label{eqn:total-eqn}
\frac{(1+\alpha)h^2}{2\eta}\ddot{X}(t)+ \frac{(1-\alpha)h}{\eta}\dot{X}(t) +  \nabla f(X(t)) +O(h) = 0.
\end{align}
Taking the limit of $h\rightarrow 0$, we rewrite \eqref{eqn:total-eqn} 
in a more convenient form,
\begin{align}\label{unified-ODE}
 m\ddot{X}(t)+c\dot{X}(t) +  \nabla f(X(t)) = 0.
\end{align}
Here \eqref{unified-ODE} describes exactly a \emph{damped oscillator system}
in $d$ dimensions with
\begin{align*}
\begin{array}{ll}
 m:=\frac{1+\alpha}{2}\frac{h^2}{\eta}\quad &\textrm{as\quad the {\it particle mass},}\\
 c:=\frac{(1-\alpha)h}{\eta}\quad &\textrm{as\quad the {\it damping coefficient},}\\  
\textrm{and}~~f(x)\quad &\textrm{as\quad the {\it potential field}}.
\end{array}
\end{align*}
Let us now consider how to choose $h$ for different settings. 
The basic principle is that \textbf{ both $m$ and $c$ are finite under the limit $h, \eta\rightarrow 0$}.
In other words, the physical system is valid.
Taking VGD as an example, for which we have $\alpha = 0$.
In this case, the only valid setting is $h=\Theta(\eta)$, under which, $m\rightarrow0$ and $c\rightarrow c_0$ for some constant $c_0$.
We call such a particle system \emph{massless}.
For NAG, it can also be verified that only  $h=\Theta(\sqrt{\eta})$ results in a \emph{valid}  physical system and it is \emph{massive} ($0<m< \infty, 0\le c<\infty$).
Therefore, we provide a unified framework of choosing the correct time scaling factor $h$.



\subsection{A Physical System: Damped Harmonic Oscillator}
In classic mechanics, the harmonic oscillator is one of the first mechanic systems, which admit an exact solution. This system consists of a massive particle and restoring force. A typical example is a massive particle connecting to a massless spring.

The spring always tends to stay at the equilibrium position. 
When it is stretched or compressed, there will be a force acting on the object that stretches or compresses it. 
The force is always pointing toward the equilibrium position.
The energy stored in the spring is
\begin{align*}
 V(X):=\frac{1}{2}\cK X^2,
\end{align*}
where $X$ denotes the displacement of the spring, and $\cK$ is the Hooke's constant of the spring.
Here $V(x)$ is called the \emph{potential} energy in existing literature on physics.

\begin{minipage}{.33\textwidth}
	\begin{figure}[H]
		\centering
		\includegraphics[width=1.3in]{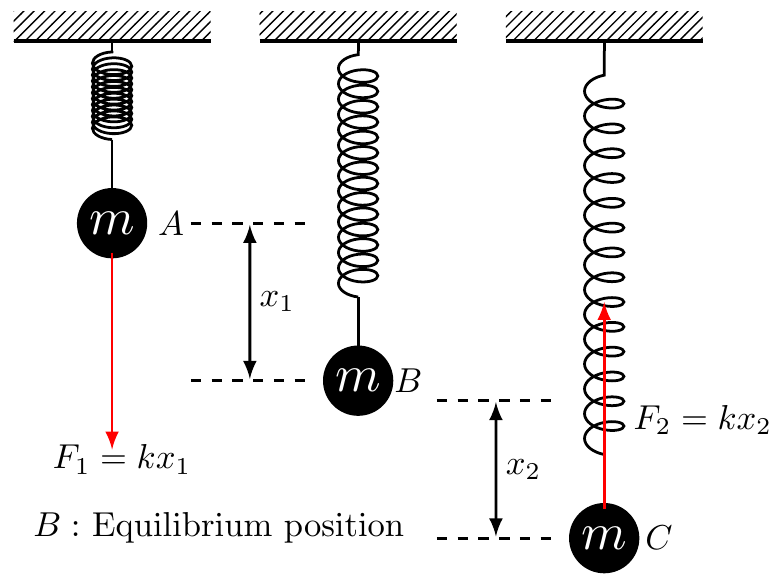}
		\includegraphics[width=1.45in]{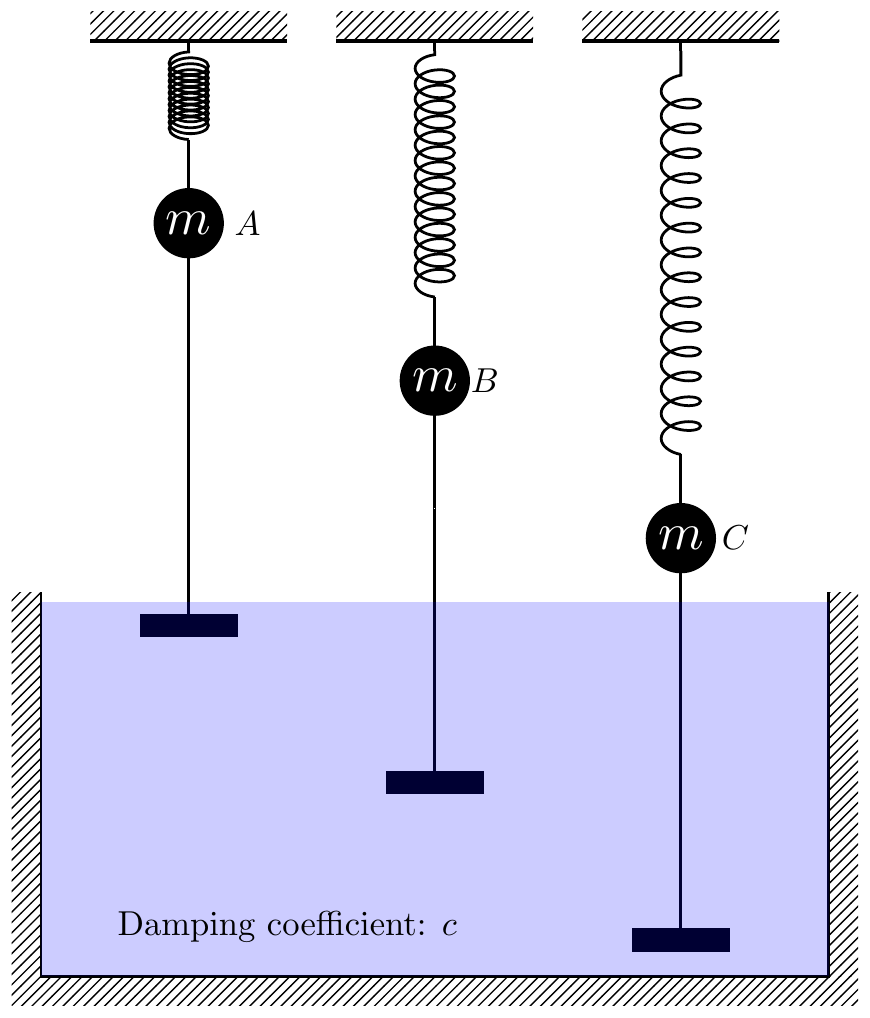}
		\caption{An illustration of the harmonic oscillators: A massive particle connects to a massless spring. (Top) Undamped harmonic oscillator; (Bottom) Damped harmonic oscillator. }
		\label{DHO-example}
	\end{figure}
\end{minipage}\hspace{0.05in}
\begin{minipage}{.62\textwidth}
	
	When one end of spring is attached to a fixed point, and the other end is attached to a freely moving particle with mass $m$, we obtain a \emph{harmonic oscillator}, as illustrated in Figure \ref{DHO-example}. If there is no friction on the particle, by Newton's law, we write the differential equation to describe the system:
	\begin{align*}
	m\ddot{X} + \cK X = 0
	\end{align*}
	where 
	$\ddot{X}: = d^2X/dt^2$ is 
	the acceleration of the particle.
	If we compress the spring and release it at point $x_0$, the system will start oscillating, i.e., at time $t$, the position of the particle is $X(t) = x_0\cos(\omega t)$, where $\omega=\sqrt{\cK/m}$ is the oscillating frequency. 
	
	Such a system has two physical properties: (1) The total energy
	\begin{align*}
	\cE(t):=V(X(t)) + K(X(t)) = V(x_0)
	\end{align*}
	is always a constant, where $K(X):=\frac{1}{2}m\dot{X}^2$ is the kinetic energy of the system. This is also called \emph{energy conservation} in physics;
	(2) The system never stops.
\end{minipage}
\vspace{0.1in}

The harmonic oscillator is closely related to optimization algorithms. As we will show later, all our aforementioned optimization algorithms simply simulate a system, where a particle is falling inside a given potential. From a perspective of optimization, the equilibrium is essentially the minimizer of the quadratic potential function $V(x) = \frac{1}{2}\cK x^2$. The desired property of the system is to stop the particle at the minimizer. However, a simple harmonic oscillator would not be sufficient and does not correspond to a convergent algorithm, since the system never stops: the particle at the equilibrium has the largest kinetic energy, and the inertia of the massive particle would drive it away from the equilibrium.

One natural way to stop the particle at the equilibrium is adding damping to the system, which dissipates the mechanic energy, just like the real-world mechanics. 
A simple damping is a force proportional to the negative velocity of the particle (e.g. submerge the system in some viscous fluid) defined as
\begin{align*}
F_f= -c \dot{X},
\end{align*}
where $c$ is the \emph{viscous damping coefficient}. 
Suppose the potential energy of the system is $f(x)$, then the differential equation of the system is,
\begin{align}
\label{eqn:damped oscillator}
m\ddot{X} + c\dot{X} + \nabla f(X) = 0.
\end{align}
For the quadratic potential, i.e., $f(x) = \frac{\cK}{2}\|x-x^*\|^2$, the energy exhibits exponential decay, i.e.,
\begin{align*}
\cE(t)\propto \exp(-ct/(2m))
\end{align*}
for \emph{under damped} or nearly \emph{critical damped system} (e.g. $c^2\lesssim 4m \cK$).

For an \emph{over damped system} (i.e. $c^2>4m\cK$), the energy decay is
\begin{align*}
\cE(t)\propto \exp\Big(-\frac{1}{2}\Big[\frac{c}{m}-\sqrt{\frac{c^2}{m^2}-\frac{4\cK}{m}}\Big]t\Big).
\end{align*}
For extremely over damping cases, i.e., $c^2\gg 4m\cK$, we have $\frac{c}{m}-\sqrt{\frac{c^2}{m^2}-\frac{4\cK}{m}}\rightarrow \frac{2\cK}{c}$. This decay does not depend on the particle mass. 
The system exhibits a behavior as if the particle has no mass. In the language of optimization, the corresponding algorithm has linear convergence.
Note that the convergence rate does only depend on the ratio $c/m$ and does not depend on $\cK$ when the system is under damped or critically damped.
The fastest convergence rate is obtained, when the system is critically damped, $c^2=4m\cK$. 
\subsection{Sufficient Conditions for Convergence}

For notational simplicity, we assume that 
$x^*=0$ is a global minimum of $f$ with $f(x^*)=0$. 
The potential energy of the particle system is simply defined as $V(t):=V(X(t)):= f(X(t))$. If an algorithm converges {to optimal, a sufficient condition is that} the corresponding potential energy $V$ {decreases} over time. 
The decreasing rate determines the convergence rate of the corresponding algorithm.

\begin{theorem}
\label{thm:convergence general}
Let $\gamma(t)>0$ be a nondecreasing function of $t$ and 
$\Gamma(t)\ge 0$ be a nonnegative function. Suppose that $\gamma(t)$ and $\Gamma(t)$ satisfy
\begin{align*}
 \frac{d (\gamma(t)(V(t)+\Gamma(t))) }{dt} \le 0 \quad \text{and}\quad\lim_{t\rightarrow 0^+}\gamma(t)(V(t)+\Gamma(t))) < \infty.
\end{align*}
Then the convergence rate of the algorithm is characterized by $\frac{1}{\gamma(t)}$.
\end{theorem}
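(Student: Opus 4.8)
The plan is to integrate the differential inequality and read off the bound. Introduce the shorthand $\Phi(t) := \gamma(t)\bigl(V(t)+\Gamma(t)\bigr)$ for the weighted total energy. The first hypothesis says $\Phi'(t)\le 0$, so $\Phi$ is nonincreasing on $(0,\infty)$; together with the second hypothesis this makes $\Phi$ bounded above by the finite constant $C := \lim_{t\to 0^+}\Phi(t)$. Concretely, for any $0<s<t$ monotonicity gives $\Phi(t)\le\Phi(s)$, and letting $s\to 0^+$ yields $\Phi(t)\le C$ for all $t>0$.

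Next I would discard the auxiliary term. Because $\Gamma(t)\ge 0$ and $\gamma(t)>0$, we have $\gamma(t)V(t)\le \gamma(t)\bigl(V(t)+\Gamma(t)\bigr)=\Phi(t)\le C$, hence $V(t)\le C/\gamma(t)$. Recalling the normalization $x^*=0$, $f(x^*)=0$ and the definition $V(t)=f(X(t))$, this is exactly $f(X(t))-f(x^*)\le C/\gamma(t)$; since $\gamma$ is nondecreasing, $1/\gamma(t)$ is a genuine (non-worsening) decay envelope for the optimality gap along the trajectory, which is the claim. One then transports this to the algorithm via the time rescaling $t=kh$ from \eqref{time-iteration-scaling}.

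The only delicate point — and the part I would treat as the main obstacle, modest as it is — is the regularity needed to pass from ``$\Phi'\le 0$'' to ``$\Phi$ nonincreasing'' and the treatment of the endpoint $t=0^+$. For the endpoint, one never assumes $\Phi$ is defined or continuous at $0$; the $\limsup$/$\lim$ argument above is what makes the hypothesis $\lim_{t\to 0^+}\Phi(t)<\infty$ (rather than a value at $0$) suffice. For the monotonicity, if $\Phi$ is differentiable the mean value theorem applies directly; in the natural setting where $X(t)$ solves the ODE \eqref{unified-ODE} and $\gamma,\Gamma$ are absolutely continuous, $\Phi$ is absolutely continuous and $\Phi(t)-\Phi(s)=\int_s^t\Phi'(\tau)\,d\tau\le 0$. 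Either way the step is immediate, so the theorem itself is short; in the sections that follow, the real work is not in this lemma but in \emph{designing} $\gamma$ and $\Gamma$ for each algorithm and each regularity condition (the PL, quadratic-growth, strong-convexity, self-concordance cases) so that the two hypotheses actually hold, and in checking that the resulting $1/\gamma(t)$ reproduces the known discrete-time rates.
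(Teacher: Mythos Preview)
Your proposal is correct and follows essentially the same approach as the paper: use the differential inequality to conclude that $\gamma(t)(V(t)+\Gamma(t))$ is nonincreasing, bound it by its limit as $t\to 0^+$, drop the nonnegative $\Gamma$ term, and divide by $\gamma(t)$. The paper's proof is the same two-line computation, only without your extra discussion of regularity at the endpoint and absolute continuity; your version is slightly more careful but not different in substance.
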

\begin{proof} By $\frac{d (\gamma(t)(V(t)+\Gamma(t))) }{dt} \le 0$, we have
\begin{align*}
 \gamma(t)(V(t) + \Gamma(t)) \le \gamma(0^+)(f(X(0^+)) + \Gamma(0^+)).
\end{align*}
This further implies $f(X) \le V(t)  +\Gamma(t) \le \frac{\gamma(0^+) (f(X(0^+)) + \Gamma(0^+))}{\gamma(t)}$.
\end{proof}
{In words,  $\gamma(t)[V(t)+\Gamma(t)]$ serves as a Lyapunov function of system.}
We say that an algorithm is $(1/\gamma)$-\emph{convergent}, if the potential energy decay rate is $\cO(1/\gamma)$. 
For example, $\gamma(t) = e^{a t}$ corresponds to linear convergence, and $\gamma=at$ corresponds to sublinear convergence, where $a$ is a constant and independent of $t$.
In the following section, we apply Theorem~\ref{thm:convergence general} to different problems by choosing different $\gamma$'s and $\Gamma$'s.
\section{Convergence Rate in Continuous Time}

We derive the convergence rates of different algorithms
for different families of objective functions. Given our proposed framework, we only need to find $\gamma$ and $\Gamma$ to characterize the energy decay.

\subsection{Convergence Analysis of VGD}

We study the convergence of VGD for two classes of functions: {\bf (1)} General convex function --- \cite{Nes13} has shown that VGD achieves $\cO(L/k)$ convergence for general convex functions; {\bf (2)} A class of functions satisfying the Polyak-\L ojasiewicz (P\L) condition, which is defined as follows \citep{Pol63,KNS16}.
\begin{assumption}
We say that $f$ satisfies the $\mu$-P\L~condition, if there exists a constant $\mu$ such that
for any $x\in\RR^d$, we have $0<\frac{f(x)}{\nbr{\nabla f(x)}^2}\le \frac{1}{2\mu}$.
\end{assumption}


\cite{KNS16} has shown that the P\L~condition is the weakest condition among the following conditions: strong convexity (SC), essential strong convexity (ESC), weak strong convexity (WSC), restricted secant inequality (RSI) and error bound (EB). Thus, the convergence analysis for the P\L~condition naturally extends to all the above conditions. Please refer to \cite{KNS16} for more detailed definitions and analyses as well as various examples satisfying such a condition in machine learning.

\subsubsection{Sublinear Convergence for General Convex Function}

By choosing $\Gamma(t) = \frac{c\nbr{X}^2}{2t}$ and $\gamma(t) = t$, we have
{
\begin{align*}
\frac{d(\gamma(t)(V(t)+\Gamma(t)))}{dt} &= f(X(t)) + t\inner{\nabla f(X(t))}{\dot{X}(t)} + \inner{X(t)}{c\dot{X}(t)}\nonumber\\
&=f(X(t)) - \inner{\nabla f(X(t))}{X(t)} - \frac{t}{c}\nbr{\nabla f(X(t))}^2
\le 0,
\end{align*}
}
where the last inequality follows from the convexity of $f$. Thus, Theorem~\ref{thm:convergence general} implies
\begin{align}\label{VGD-GC-rate}
 f(X(t))\leq\frac{c\norm{x_0}^2}{2t}.
\end{align}
Plugging $t=kh$ and $c=h/\eta$ into \eqref{VGD-GC-rate} and set $\eta=\frac{1}{L}$, we match the convergence rate in \cite{Nes13}:
\begin{align}
\label{eqn:VGD-GC-rate-discrete}
 f(x^{(k)})\le \frac{c\nbr{x_0}^2}{2kh}={\frac{L\nbr{x_0}^2}{2k}}.
\end{align}

\subsubsection{Linear Convergence Under the Polyak-\L{}ojasiewicz Condition}\label{sec:VGD-PL}


Equation \eqref{unified-ODE} implies $\dot{X} = -\frac{1}{c}\nabla f(X(t))$. 
By choosing $\Gamma(t) = 0$ and $\gamma(t)= \exp\rbr{\frac{2\mu t}{c}}$, we  obtain
{
\begin{align*}
\frac{d(\gamma(t)(V(t)+\Gamma(t)))}{dt}&=\gamma(t) {\rbr{\frac{2\mu}{c}f(X(t)) +\inner{\nabla f(X(t))}{\dot{X}(t)}}} 
\\
&=\gamma(t){\rbr{\frac{2\mu}{c}f(X(t)) -\frac{1}{c}\nbr{\nabla f(X(t))}^2}}.
\end{align*}
}
By the $\mu$-P\L~condition: $0<\frac{f(X(t))}{\nbr{\nabla f(X(t))}^2}\le \frac{1}{2\mu}$ for some constant $\mu$ and any $t$, we have
\begin{align*}
 \frac{d(\gamma(t)(V(t)+\Gamma(t)))}{dt}\le 0.
\end{align*}
By Theorem~\ref{thm:convergence general}, for some constant $C$ depending on $x_0$, we obtain
\begin{align}\label{VGD-PL-rate}
 f(X(t)) \leq C'\exp\left(-\frac{2\mu t}{c}\right),
\end{align}
which matches the behavior of an extremely over damped harmonic oscillator.
Plugging $t=kh$ and $c=h/\eta$ into \eqref{VGD-PL-rate} and set $\eta=\frac{1}{L}$, we match the convergence rate in \cite{KNS16}:
\begin{align}
\label{eqn:VGD-convergence}
 f(x_k) \le {C}\exp\left(-\frac{2\mu}{L}k\right)
\end{align}
for some  constant $C$ depending on $x^{(0)}$.

\subsection{Convergence Analysis of NAG}

We study the convergence of NAG for a class of convex functions satisfying the Polyak-\L ojasiewicz (P\L) condition.  The convergence of NAG has been studied for general convex functions in \cite{SBC14}, and therefore is omitted. \cite{Nes13} has shown that NAG achieves a linear convergence for strongly convex functions. Our analysis shows that the strong convexity can be relaxed as it does in VGD. However, in contrast to VGD, NAG requires $f$ to be convex.
For a  $L$-smooth convex function satisfying $\mu$-P\L~condition, we have the particle mass and damping coefficient as $m=\frac{h^2}{\eta}\quad\textrm{and}\quad c = \frac{2\sqrt{\mu} h}{\sqrt{\eta}}=2\sqrt{m\mu}$.
By \cite{KNS16}, under convexity, P\L~is equivalent to quadratic growth (QG). 
Formally, we assume that $f$ satisfies the following condition.
\begin{assumption}
We say that $f$ satisfies the $\mu$-QG condition, if there exists a constant $\mu$ such that for any $x\in\RR^d$, we have $ f(x)-f(x^*)\ge \frac{\mu}{2}\nbr{x-x^*}^2$.
\end{assumption}


We then proceed with the proof for NAG. We first define two parameters, $\lambda$ and $ \sigma$. Let
\[
 \gamma(t) = \exp(\lambda c t)\quad\textrm{and}\quad\Gamma(t) = \frac{m}{2}\|\dot{X} + \sigma cX\|^2.
\]

Given properly chosen $\lambda$ and $\sigma$, we show that the required condition in Theorem~\ref{thm:convergence general} is satisfied.  
Recall that our proposed physical system has kinetic energy $\frac{m}{2}\|\dot{X}(t)\|^2$. In contrast to an un-damped system, NAG takes an effective velocity $\dot{X}+{\sigma c}X$ in the viscous fluid. 
By simple manipulation,
{
\begin{align*}
 \frac{d(V(t) + \Gamma(t))}{dt}=\langle{\nabla f(X)},{\dot{X}}\rangle+m\langle{\dot{X} + \sigma c X},{\ddot{X} + \sigma c \dot{X}}\rangle.
\end{align*}
}
We then observe
{
\begin{align*}
\exp (-\lambda c t) &t\frac{d(\gamma(t)(V(t) + \Gamma(t)))}{dt} 
= \Big[\lambda c f(X) + \frac{\lambda c m}{2}\norm{\dot{X} + \sigma c X}^2 + \frac{d(V(t) + \Gamma(t))}{dt}\Big]
\\
&\le \Big[\lambda c\left(1+\frac{m\sigma^2 c^2}{\mu}\right)f(X)+
\langle{\dot{X}},\left(\frac{\lambda c m}{2} + m\sigma c\right)\dot{X}
+ \nabla f(X)+ m\ddot{X}\rangle
\nonumber\\
& ~~~~   + \langle{ X},{(\lambda \sigma m c^2 + m\sigma^2 c^2)\dot{X} + m\sigma c \ddot{X} }\rangle
\Big].
\end{align*}
}
Since $c^2= 4m\mu$, we argue that if  positive $\sigma$ and $\lambda$ satisfy 
\begin{align}
\label{NAG-QG-condition}
 m(\lambda + \sigma)=1\quad \textrm{and} \quad \lambda\rbr{1+\frac{m\sigma^2 c^2}{\mu}}\le \sigma,
\end{align}
then we guarantee $\frac{d(\gamma(t)(V(t) + \Gamma(t)))}{dt}\le 0$.  
Indeed, we obtain 
\begin{align*}
& \langle \dot{X}, \left(\frac{\lambda c m}{2} + m\sigma c\right)\dot{X} + \nabla f(X) + m\ddot{X}\rangle=-\frac{\lambda mc}{2}\norm{\dot{X}}^2\le 0
\quad\textrm{and}
\\
& \langle X, (\lambda \sigma m c^2 + m\sigma^2 c^2)\dot{X} + m\sigma c \ddot{X} \rangle = -\sigma c \langle{ X},{\nabla f(X)}\rangle.
\end{align*}
By convexity of $f$, we have $\lambda c\big(1+\frac{m\sigma^2 c^2}{\mu}\big)f(X) - \sigma c \langle X, \nabla f(X)\rangle\le \sigma c f(X) - \sigma c \langle X, \nabla f(X)\rangle \le 0$. To make \eqref{NAG-QG-condition} hold, it is sufficient to set $\sigma = \frac{4}{5m}$ and $\lambda = \frac{1}{5m}$.
By Theorem~\ref{thm:convergence general}, we obtain
\begin{align}\label{NAG-convergence}
 f(X(t)) \leq C''\exp \left(-\frac{ct}{5m}\right)
\end{align}
for some constant $C''$ depending on $x^{(0)}$.
Plugging $t=hk$, $m=\frac{h^2}{\eta}$, $c=2\sqrt{m\mu}$, and $\eta=\frac{1}{L}$ into \eqref{NAG-convergence}, we have that
{
\begin{align}
\label{NAG-QG-rate} f(x_k)\le C''\exp\rbr{-\frac{2}{5}\sqrt{\frac{\mu}{L}}k}.
\end{align}
}
Comparing with VGD,  NAG improves the constant term on the convergence rate for convex functions satisfying P\L~condition from $L/\mu$ to $\sqrt{L/\mu}$.
This matches with the algorithmic proof of \cite{Nes13} for strongly convex functions, and \cite{zhang2016new} for convex functions satisfying the QG condition.

\subsection{Convergence Analysis of RCGD and ARCG}

Our proposed framework also justifies the convergence analysis of the RCGD and ARCG algorithms. We will show that the trajectory of the RCGD algorithm converges weakly to the VGD algorithm, and thus our analysis for VGD directly applies.
Conditioning on $x^{(k)}$, the updating formula for RCGD is
\begin{align}
\label{CD-update-vector}
 x^{(k)}_{i} = x^{(k-1)}_{i} - \eta \nabla_i f(x^{(k-1)})
\quad\textrm{and}\quad x^{(k)}_{\setminus i}=x^{(k-1)}_{\setminus i},
\end{align}
where $\eta$ is the step size and $i$ is randomly selected from $\{1, 2,\ldots, d\}$ with equal probabilities. 
Fixing a coordinate $i$, we compute its expectation and variance as
\begin{align*}
\EE\big(x_i^{(k)} -  {x}_i^{(k-1)}\big| x_i^{(k)}\big) 
&
=-\frac{\eta}{d}\nabla_if\rbr{x^{(k-1)}}
{~\text{and}~}\\
\Var\big(x_i^{(k)} -  {x}_i^{(k-1)}\big| x_i^{(k)}
\big)
&
= \frac{\eta^2(d-1)}{d^2}{\nbr{\nabla_if\rbr{x^{\rbr{k-1}}}}^2}.
\end{align*}
We define the infinitesimal time scaling factor $h\le \eta$ as it does in Section~\ref{sec:unified framework} and denote $\tilde{X}^h(t) := x^{(\lfloor t/h\rfloor)}$. 
We prove that for each $i\in[d]$, $\tilde{X}_i^h(t)$ converges weakly to a deterministic function $X_i(t)$ as $\eta\rightarrow 0$. 
Specifically, we rewrite \eqref{CD-update-vector} as,
\begin{align}
\label{CD-update-vector-inf}
 \tilde{X}^h(t+h)-\tilde{X}^h(t) = -\eta \nabla_i f(\tilde{X}^h(t)).
\end{align}
Taking the limit of $\eta\rightarrow 0$ at a fix time $t$, we have
\begin{align*}
\abr{X_i(t+h)-X_i(t)} = \cO(\eta)~\textrm{and}~\frac{1}{\eta}\EE\big(\tilde{X}^h(t+h) -  \tilde{X}^h(t)\big| \tilde{X}^h(t)\big)=-\frac{1}{d}\nabla f(\tilde{X}^h(t)) + \cO(h).
\end{align*}
Since $\norm{\nabla f(\tilde{X}^h(t))}^2$ is bounded at the time $t$, we have $\frac{1}{\eta}\Var\big(\tilde{X}^h(t+h) -  \tilde{X}^h(t)\big| \tilde{X}^h(t)\big)=\cO(h)$. Using an infinitesimal generator argument in \cite{ethier2009markov}, we conclude that $\tilde{X}^h(t)$ converges to $X(t)$ weakly as $h\rightarrow 0$, where $X(t)$ satisfies, $\dot{X}(t) + \frac{1}{d}\nabla f(X(t)) = 0$ and $X(0)=x^{(0)}$. Since $\eta\le \frac{1}{L_{\max}}$, by \eqref{eqn:VGD-convergence}, we have
\[
 f(x_k)\le C_1\exp\big(-\frac{2\mu}{dL_{\max}}k\big).
\]
for some  constant $C_1$ depending on $x^{(0)}$.
The analysis for general convex functions follows similarly.
One can easily match the convergence rate as it does in \eqref{eqn:VGD-GC-rate-discrete}, $ f(x^{(k)})\le \frac{c\nbr{x_0}^2}{2kh}={\frac{dL_{\max}\nbr{x_0}^2}{2k}}$.

Repeating the above argument for ARCG, we obtain that the trajectory $\tilde{X}^h(t)$ converges weakly to $X(t)$, where $X(t)$ satisfies
\[
 m\ddot{X}(t) + c\dot{X}(t) + \nabla f(X(t)) = 0.
\]
For general convex function, we have $ m=\frac{h^2}{\eta'}$ and $c= \frac{3m}{t}$, where $\eta'=\frac{\eta}{d}$.
By the analysis of  \cite{SBC14}, we have $f(x_k)\le \frac{C_2 d}{k^2}$, for some constant $C_2$ depending on $x^{(0)}$ and $L_{\max}$.

For convex functions satisfying $\mu$-QG condition, $m=\frac{h^2}{\eta'}$ and $c= 2\sqrt{\frac{m\mu}{d}}$. By \eqref{NAG-QG-rate}, we obtain $f(x_k)\le C_3\exp\big(-\frac{2}{5d}\sqrt{\frac{\mu}{L_{\max}}}\big)$ for some constant $C_3$ depending on $x^{(0)}$.

\subsection{Convergence Analysis for Newton}

Newton's algorithm is a second-order algorithm. Although it is different from both VGD and NAG, we can fit it into our proposed framework by choosing $\eta=\frac{1}{L}$ and the gradient as $L\sbr{\nabla^2f(X)}^{-1}\nabla f(X)$.
We consider only the case $f$ is $\mu$-strongly convex, $L$-smooth and $\nu$-self-concordant.
By \eqref{unified-ODE}, if $h/\eta$ is not vanishing under the limit of $h\rightarrow 0$, we achieve a similar equation,
\[
  \bC\dot{X} + \nabla f(X) = 0,
\]
where $\bC=h\nabla^2 f(X)$ is the \emph{viscosity tensor} of the system.
In such a system, the function $f$ not only determines the gradient field, but also determines a viscosity tensor field.
The particle system is as if submerged in an anisotropic fluid that exhibits different viscosity along different directions. 
We release the particle at point $x_0$ that is sufficiently close to the minimizer $0$, i.e. $\nbr{x_0-0}\le \zeta$ for some parameter $\zeta$ determined by $\nu$, $\mu$, and $L$. 
Now we consider the decay of the potential energy $V(X):=f(X)$.
By Theorem~\ref{thm:convergence general} with $\gamma(t) = \exp(\frac{t}{2h})$ and $\Gamma(t) = 0$, we have
{
\begin{align*}
&  \frac{d(\gamma(t)f(X))}{dt} = \exp\rbr{\frac{t}{2h}} \cdot
\yestwocl{
\\
&~~~~
}
  \sbr{\frac{1}{2h} f(X) - \frac{1}{h}\inner{\nabla f(X)}{ (\nabla^2f(X))^{-1}\nabla f(X)}}.
\end{align*}
}
By simple calculus, we have $  \nabla f(X) = -{\int_{1}^0\nabla^2f((1-t)X)dt}\cdot{X}$. By the self-concordance condition, we have
\begin{align*}
  {(1-\nu t\nbr{X}_X)^2}\nabla^2f(X)
\yestwocl{&}  \preceq \nabla^2f((1-t)X)dt 
\yestwocl{\\&}  \preceq\frac{1}{(1-\nu t\nbr{X}_X)^2}\nabla^2f(X),
\end{align*}
where $\nbr{v}_X = \rbr{v^T\nabla^2 f(X) v}\in \sbr{\mu\nbr{v}_2, L\nbr{v}_2}$. Let $\beta =\nu \zeta L \le 1/2$. By integration and the convexity of $f$, we have
{

\begin{align*}
  {(1-\beta)}\nabla^2f(X) &  \preceq \int_{0}^1\nabla^2f((1-t)X)dt \preceq\frac{1}{1-\beta}\nabla^2f(X)
\\
\textrm{and}~~
  \frac12 f(X) &  - \inner{\nabla f(X)}{ (\nabla^2f(X))^{-1}\nabla f(X)} 
\yestwocl{
\\&
}
  \le \frac12 f(X) - \frac{1}{2}\inner{\nabla f(X)}{X} 
\le 0.
\end{align*}
}

Note that our proposed ODE framework only proves a local linear convergence for Newton method under the strongly convex, smooth and self concordant conditions. The convergence rate contains an absolute constant, which does not depend on $\mu$ and $L$. This partially justifies the superior local convergence performance of the Newton's algorithm for ill-conditioned problems with very small $\mu$ and very large $L$. Existing literature, however, has proved the local quadratic convergence of the Newton's algorithm, which is better than our ODE-type analysis. This is mainly because the discrete algorithmic analysis takes the advantage of ``large'' step sizes, but the ODE only characterizes ``small'' step sizes, and therefore fails to achieve quadratic convergence.


\vspace{-0.175in}

\section{Numerical Illustration and Discussions}

Due to the space limit, we present an numerical illustration in Figure 2. See more details on numerical results in Appendix \ref{numerical}. 
We then give a more detailed interpretation of our proposed system from a perspective of physics:

\noindent{\bf Consequence of Particle Mass} --- As  shown in Section \ref{sec:phys}, a massless particle system (mass $m=0$) describes the simple gradient descent algorithm.

\begin{minipage}{.3\textwidth}
\begin{figure}[H]
\centering
\includegraphics[width=1.5in]{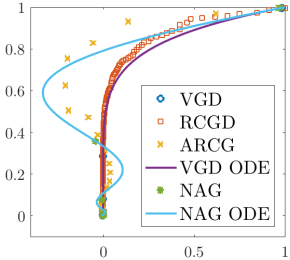}\label{trajectory}
\caption{\small The algorithmic iterates and trajectories of a simple quadratic program.}
\end{figure}
\vspace{0.01in}
\end{minipage}\hspace{0.1in}
\begin{minipage}{.64\textwidth}
By Newton's law, a $0$-mass particle can achieve infinite acceleration and has infinitesimal response time to any force acting on it. Thus, the particle is ``locked'' on the force field (the gradient field) of the potential ($f$) -- the velocity of the particle is always proportional to the restoration force acting on the particle.
The convergence rate of the algorithm is only determined by the function $f$ and the damping coefficient. The mechanic energy is  stored in the force field (the potential energy) rather than in the kinetic energy. 
Whereas for a massive particle system, the mechanic energy is also partially stored in the kinetic energy of the particle. Therefore, even when the force field is not strong enough, the particle keeps a high speed.
\end{minipage}

\vspace{0.1in}

\noindent{\bf Damping and Convergence Rate}  --- For a quadratic potential $V(x) = \frac{\mu}{2}\nbr{x}^2$, the system has a exponential energy decay, where the exponent factor depends on mass $m$, damping coefficient $c$, and the property of the function (e.g. P{\L}-conefficient). As discussed in Section~\ref{sec:phys}, the decay rate is the fastest when the system is critically damped, i.e, $c^2=4m\mu$. For either under or over damped system, the decay rate is slower. For a potential function $f$ satisfying convexity and $\mu$-P\L{} condition, NAG corresponds to a nearly critically damped system, whereas VGD corresponds to an extremely over damped system, i.e., $c^2\gg 4m\mu$. Moreover, we can achieve different acceleration rate by choosing different $m/c$ ratio for NAG, i.e., $\alpha=\frac{{1/(\mu\eta)}^{s}-1}{{1/(\mu\eta)}^{s}+1}$ for some absolute constant $s>0$. However $s=1/2$ achieves the largest convergence rate since it is exactly the \emph{critical damping}: $c^2=4{m\mu}$.


\noindent{\bf Connecting P\L{} Condition to Hooke's law} --- The $\mu$-P\L{} and convex conditions together naturally mimic the property of a quadratic potential $V$, i.e., a damped harmonic oscillator. Specifically, the $\mu$-P\L{} condition

\notwocl{
\begin{figure}[H]
\centering
\includegraphics[width=4.2in]{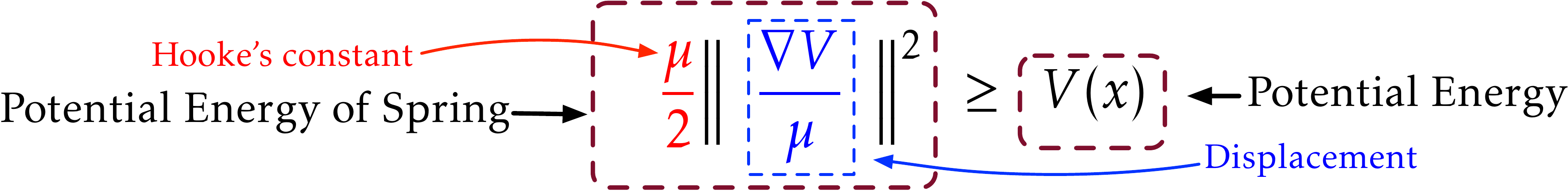}
\end{figure}
}
\noindent guarantees that the force field is strong enough, since the left hand side of the above equation 
is exactly the potential energy of a spring based on Hooke's law. Moreover, the convexity condition $V(x) \le \inner{\nabla V(x)}{X}$ guarantees that the force field has a large component pointing at the equilibrium point (acting as a restoration force).
{
As indicated in \cite{KNS16}, P\L~is a much weaker condition than the strong convexity. Some functions that satisfy local P\L~condition do not even satisfy convexity, e.g., matrix factorization.
The connection between the P\L~condition and the Hooke’s law indicates that strong convexity is not the fundamental characterization of linear convergence. If there is another condition that employs a form of the Hooke’s law, it should employ linear convergence as well.}

\bibliographystyle{ims}
\bibliography{ref}

\appendix

\section{A Brief Review of Popular Optimization Algorithms}
\label{appendix:optimization algs}
\subsection{Vanilla Gradient Descent Algorithm} 
A vanilla gradient descent (VGD) algorithm starts from an arbitrary initial
solution $x^{(0)}$. At the $k$-th iteration ($k>0$), VGD takes
\begin{align*}
x^{(k)} = x^{(k-1)} - \eta \nabla f(x^{(k-1)}),
\end{align*}
where $\eta$ is a properly chosen step size. Since VGD only needs to calculate a gradient of $f$ in each iteration, the computational cost per iteration is usually linearly dependent on $d$. For a $L$-smooth $f$, we can choose a constant step size such that $\eta\leq\frac{1}{L}$ to guarantee convergence. 

VGD has been extensively studied in existing literature. \cite{Nes13} show that:
\begin{description}
\item (1) For general convex function, VGD attains a sublinear convergence rate as
\begin{align}\label{VGD-sublinear}
f(x^{(k)}) - f(x^*) \leq \frac{L\norm{x^{(0)}-x^*}^2}{2k}\quad\textrm{for}~k=1,2,.....
\end{align}
Note that \eqref{VGD-sublinear} is also referred as an iteration complexity of $\cO(L/\epsilon)$, i.e., we need $\cO(L/\epsilon)$ such that $f(x^{(k)}) - f(x^*)\leq\epsilon$, where $\epsilon$ is a pre-specified accuracy of the objective value.
\item (2) For a $L$-smooth and $\mu$-strongly convex $f$, VGD attains a linear convergence rate as
\begin{align}\label{VGD-linear}
f(x^{(k)}) - f(x^*) \leq \left(1-\frac{1}{\kappa}\right)^k\frac{L\norm{x^{(0)}-x^*}^2}{2}
\ifdefined\twocl
\nonumber\\
\else
\quad
\fi
\textrm{for}~k=1,2,.....
\end{align}
Note that \eqref{VGD-linear} is also referred as an iteration complexity of $\cO(\kappa\cdot\log(1/\epsilon))$.
\end{description}

\subsection{Nesterov's Accelerated Gradient Algorithms}

The Nesterov's accelerated gradient (NAG) algorithms combines the vanilla gradient descent algorithm with an additional momentum at each iteration. Such a modification, though simple, enables NAG to attain better convergence rate than VGD. Specifically, NAG starts from an arbitrary initial
solution $x^{(0)}$ along with an auxiliary solution $y^{(0)}=x^{(0)}$.
At the $k$-th iteration, NAG takes
\begin{align*}
x^{(k)} = y^{(k-1)} - \eta \nabla f(y^{(k-1)})
\ifdefined\twocl
\nonumber\\
\else
\quad\textrm{and}\quad
\fi
 y^{(k)} = x^{(k)}+\alpha(x^{(k)} -x^{(k-1)}),
\end{align*} 
where $\alpha=\frac{k-1}{k+2}$ for general convex $f$ and $\alpha=\frac{\sqrt{\kappa}-1}{\sqrt{\kappa}+1}$ for strongly convex $f$. Intuitively speaking, NAG takes an affine combination 
of the current and previous solutions to compute the update for the two subsequent iterations. This can be viewed as the momentum of a particle during its movement. Similar to VGD, NAG only needs to calculate a gradient of $f$ in each iteration. Similar to VGD, we can choose $\eta\leq\frac{1}{L}$ for a $L$-smooth $f$ to guarantee convergence. 

NAG has also been extensively studied in existing literature. \cite{Nes13} show that:
\begin{description}
\item (1) For general convex function, NAG attains a sublinear convergence rate as
\begin{align}\label{NAG-sublinear}
f(x^{(k)}) - f(x^*) \leq \frac{2L\norm{x^{(0)}-x^*}^2}{k^2}
\quad\textrm{for}~k=1,2,.....
\end{align}
Note that \eqref{NAG-sublinear} is also referred as an iteration complexity of $\cO(\sqrt{L/\epsilon})$.
\item (2) For a $L$-smooth and $\mu$-strongly convex $f$, NAG attains a linear convergence rate as
\begin{align}\label{NAG-linear}
f(x^{(k)}) - f(x^*) \leq \left(1-\sqrt{\frac{1}{4\kappa}}\right)^k\frac{L\norm{x^{(0)}-x^*}^2}{2}
\ifdefined\twocl
\nonumber\\
\else
\quad
\fi
\textrm{for}~k=1,2,.....
\end{align}
Note that \eqref{NAG-linear} is also referred as an iteration complexity of $\cO(\sqrt{\kappa}\cdot\log(1/\epsilon))$.
\end{description}

\subsection{Randomized Coordinate Gradient Descent Algorithm} 

A randomized coordinate gradient descent (RCGD) algorithm is closely related to VGD. RCGD starts from an arbitrary initial
solution $x^{(0)}$. Different from VGD, RCGD takes a gradient descent step only over a coordinate. Specifically, at the $k$-th iteration ($k>0$), RCGD randomly selects a coordinate $j$ from $1,...,d$, and takes
\begin{align*}
x^{(k)}_{j} = x^{(k-1)}_{j} - \eta \nabla_j f(x^{(k-1)})
\quad\textrm{and}\quad
x^{(k)}_{\setminus j}=x^{(k-1)}_{\setminus j}.
\end{align*}
where $\eta$ is a properly chosen step size. Since RCGD only needs to calculate a coordinate gradient of $f$ in each iteration, the computational cost per iteration usually does not scale with $d$. For a $L_{\max}$-coordinate-smooth $f$, we can choose a constant step size such that $\eta\leq\frac{1}{L_{\max}}$ to guarantee convergence. 

RCGD has been extensively studied in existing literature. \cite{nesterov2012efficiency,lu2015complexity} show that:
\begin{description}
\item (1) For general convex function, RCGD attains a sublinear convergence rate in terms of the expected objective value as
\begin{align}\label{RCGD-sublinear}
\EE f(x^{(k)}) - f(x^*) \leq \frac{dL_{\max}\norm{x^{(0)}-x^*}^2}{2k}\quad\textrm{for}~k=1,2,.....
\end{align}
Note that \eqref{RCGD-sublinear} is also referred as an iteration complexity of $\cO(dL_{\max}/\epsilon)$.
\item (2) For a $L_{\max}$-smooth and $\mu$-strongly convex $f$, RCGD attains a linear convergence rate in terms of the expected objective value as
\begin{align}\label{RCGD-linear}
\EE f(x^{(k)}) - f(x^*) \leq \left(1-\frac{\mu}{dL_{\max}}\right)^k\frac{L\norm{x^{(0)}-x^*}^2}{2}\ifdefined\twocl
\nonumber\\
\else
\quad
\fi
\textrm{for}~k=1,2,.....
\end{align}
Note that \eqref{RCGD-linear} is also referred as an iteration complexity of $\cO(dL_{\max}/\mu\cdot\log(1/\epsilon))$.
\end{description}

\subsection{Accelerated Randomized Coordinate Gradient Algorithms} 

Similar to NAG, the accelerated randomized coordinate gradient (ARCG) algorithms combine the randomized coordinate gradient descent algorithm with an additional momentum at each iteration. Such a modification also enables ARCG to attain better convergence rate than RCGD. Specifically, ARCG starts from an arbitrary initial
solution $x^{(0)}$ along with an auxiliary solution $y^{(0)}=x^{(0)}$.
At the $k$-th iteration ($k>0$), ARCG randomly selects a coordinate $j$ from $1,...,d$, and takes
\begin{align*}
x_j^{(k)}=y_j^{(k-1)} - \eta\nabla_j f(y^{(k-1)}),\quad x_{\setminus j}^{(k)}=y_{\setminus j}^{(k-1)},
\ifdefined\twocl
\nonumber\\
\else
\quad\text{and}\quad 
\fi
y^{(k)} = x^{(k)}+ \alpha\rbr{x^{(k)}-x^{(k-1)}}.
\end{align*}
Here $\alpha= \frac{\sqrt{\kappa_{\max}}-1}{\sqrt{\kappa_{\max}}+1}$ when $f$ is strongly convex, and $\alpha=\frac{k-1}{k+2}$ when $f$ is general convex.
Similar to RCGD, we can choose $\eta\leq\frac{1}{L_{\max}}$ for a $L_{\max}$-coordinate-smooth $f$ to guarantee convergence.

ARCG has been studied in existing literature. \cite{lin2014accelerated,fercoq2015accelerated} show that:
\begin{description}
\item (1) For general convex function, ARCG attains a sublinear convergence rate in terms of the expected objective value as
\begin{align}\label{ARCG-sublinear}
\EE f(x^{(k)}) - f(x^*) \leq \frac{2d\sqrt{L_{\max}}\norm{x^{(0)}-x^*}^2}{k^2}
\ifdefined\twocl
\nonumber\\
\else
\quad
\fi
\textrm{for}~k=1,2,.....
\end{align}
Note that \eqref{ARCG-sublinear} is also referred as an iteration complexity of $\cO(d\sqrt{L_{\max}}/\sqrt{\epsilon})$.
\item (2) For a $L_{\max}$-smooth and $\mu$-strongly convex $f$, ARCG attains a linear convergence rate in terms of the expected objective value as
\begin{align}\label{ARCG-linear}
\EE f(x^{(k)}) - f(x^*) \leq \left(1-\frac{1}{d}\sqrt{\frac{\mu}{L_{\max}}}\right)^k\frac{L\norm{x^{(0)}-x^*}^2}{2}
\ifdefined\twocl
\nonumber\\
\else
\quad
\fi
\textrm{for}~k=1,2,.....
\end{align}
Note that \eqref{ARCG-linear} is also referred as an iteration complexity of $\cO(d\sqrt{L_{\max}/\mu}\cdot\log(1/\epsilon))$.
\end{description}

\subsection{Newton's Algorithm}

The Newton's (Newton) algorithm requires $f$ to be twice differentiable. It starts with an arbitrary initial $x^{(0)}$. At the $k$-th iteration ($k>0$), Newton takes
\begin{align*}
x^{(k)} = x^{(k-1)} - \eta(\nabla^2f(x^{(k-1)}))^{-1}\nabla f(x^{(k-1)}).
\end{align*}
The inverse of the Hessian matrix adjusts the descent direction by the landscape at $x^{(k-1)}$. Therefore, Newton often leads to a steeper descent than VGD and NAG in each iteration, espcially for highly ill-conditioned problems. 

Newton has been extensively studied in existing literature with an additional self-concordant assumption as follows:
\begin{assumption}
Suppose that $f$ is smooth and convex. We define $g(t)=f(x+tv)$. We say that $f$ is self-concordant, if for any $x\in\RR^d$, $v\in\RR^d$, and $t\in\RR$, there exists a constant $\nu$, which is independent on $f$ such that we have
\begin{align*}
|g'''(t)|\leq \nu g''(t)^{3/2}.
\end{align*}
\end{assumption}
\cite{nocedal2006numerical} show that for a $L$-smooth, $\mu$-strongly convex and $\nu$-self-concordant, $f$, Newton attains a local quadratic convergence in conjunction. Specifically, given a suitable initial solution $x^{(0)}$ satisfying $\norm{x^{(0)}-x^*}_2\leq\zeta$, where $\zeta<1$ is a constant depending on on $L$, $\mu$, and $\nu$, there exists a constant $\xi$ depending only on $\nu$ such that we have
\begin{align}\label{Newton-linear}
f(x_{k+1})-f(x^*)\leq\xi[f(x^{(k)})-f(x^*)]^2
\ifdefined\twocl
\nonumber\\
\else
\quad
\fi
\textrm{for}~k=1,2,.....
\end{align}
 


Note that \eqref{Newton-linear} is also referred as an iteration complexity of $\tilde{\cO}(\log\log(1/\epsilon))$, where $\tilde{\cO}$ hides the constant term depending on $L$, $\mu$, and $\nu$. Since Newton needs to calculate the inverse of the Hessian matrix, its per iteration computation cost is at least $\cO(d^3)$. Thus, it outperforms VGD and NAG when we need a highly accurate solution, i.e., $\epsilon$ is very small.



\section{Extension to Nonsmooth Composite Optimization}

Our framework can also be extended to nonsmooth composite optimization in a similar manner to \cite{SBC14}.
Let $g$ be an $L$-smooth function, and $h$ be a general convex function (not necessarily smooth).
For $x\in\RR^d$, the composite optimization problem solves
\[
\min_{x\in\RR^d} f(x):=g(x) + h(x).
\]
Analogously to \cite{SBC14}, we define the force field as the directional subgradient $G(x, p)$ of function $f$, where $G:\RR^d\times\RR^d\rightarrow \RR^d$ is defined as $G(x, p)\in\partial f(x)$ and $\inner{G(x, p)}{p}=\sup_{\xi\in\partial f(x)}\inner{\xi}{p}$, where $\partial f(x)$ denotes the sub-differential of $f$. The existence of $G(x, p)$ is guaranteed by \cite{Rec15}. Accordingly, a new ODE describing the dynamics of the system is
\[
m\ddot{X} + c\dot{X} + G(X, \dot{X}) = 0.
\]
{Under the assumption that the solution to the ODE exists and is unique,}
we illustrate the analysis by VGD (the mass $m=0$) under the proximal-P\L{} condition. The extensions to other algorithms are straightforward. Specifically, a convex function $f$ satisfies $\mu$-proximal-P\L{} if
\begin{align}\label{proximal-PL-condition}
\frac{1}{2\mu}\inf_{p\in S^{d-1}}\nbr{G(x, p)}^2\ge f(x) - f(x^*), 
\end{align}
where $x^*=0$ is the global minimum point of $f$. Slightly different from the definition of the proximal-P\L{} condition in \cite{KNS16} involving a step size parameter, \eqref{proximal-PL-condition} does not involve any additional parameter. This is actually a more intuitive definition by choosing an appropriate subgradient. Let $\gamma(t) = e^{2\mu t/c}$ and $\Gamma(t) = 0$. For a small enough $\Delta t > 0$, we study $$\frac{{\gamma(t+\Delta t) f(t+\Delta t) - \gamma(t) f(t)}}{\Delta t}.$$

By Taylor expansions and the local Lipschitz property of convex function $f$, we have
\begin{align*}
\gamma(t+\Delta t) &= \exp\rbr{\frac{2\mu t}{c}}\rbr{1+\frac{2\mu}{c}\Delta t} + o(\Delta t)~\textrm{and}~\\
f(X(t+\Delta t)) &= f(X) + \langle{\dot{X}},{G(X, \dot{X})}\langle\Delta t + o(\Delta t)\rangle.
\end{align*}
Combining the above two expansions, we obtain
\begin{align*}
\yestwocl{&} \gamma(t+\Delta t) f(X(t+\Delta t)) 
\yestwocl{\\&}  = \exp\rbr{\frac{2\mu t}{c}}\big({f(X) + \frac{2\mu}{c}f(X)\Delta t + \langle{\dot{X}},{G(X, \dot{X})}\rangle\Delta t}\big) 
\yestwocl{
	\\
	& ~~~~
}
+ o(\Delta t).
\end{align*}
This further implies
{\small
	\begin{align*}
	\yestwocl{&} \frac{{\gamma(t+\Delta t) f(t+\Delta t) - \gamma(t) f(t)}}{\Delta t} 
	\yestwocl{\\&} 
	= \exp\rbr{\frac{2\mu t}{c}}\big({\frac{2\mu}{c}f(X) - \frac{1}{c}\norm{G(X, \dot{X})}^2}\big) + O(\Delta t).
	\end{align*}
}
By the $\mu$-proximal-P\L{} condition of $f$, we have $\lim_{\Delta t\rightarrow0}\frac{{\gamma(t+\Delta t) f(t+\Delta t) - \gamma(t) f(t)}}{\Delta t}\le 0$. The rest of the analysis follows exactly the same as it does in Section~\ref{sec:VGD-PL}.

\section{Numerical Illustration}\label{numerical}
We present an illustration of our theoretical analysis in Figure 2. We consider a strongly convex quadratic program
\begin{align*}
f(x) = \frac{1}{2}x^\top H x,
\quad \text{where} \quad
H = \left[
\begin{array}{cc}
300 &1\\
1 &50\\
\end{array}
\right].
\end{align*}
Obviously, $f(x)$ is strongly convex and $x^*=[0,0]^\top$ is the minimizer. We choose $\eta = 10^{-4}$ for VGD and NAG, and $\eta=2\times 10^{-4}$ for RCGD and ARCG. The trajectories of VGD and NAG are obtained by the default method for solving ODE in MATLAB.

\end{document}